\documentclass[11pt]{article}

\usepackage[final]{acl}
\usepackage{times}
\usepackage{latexsym}

\usepackage[T1]{fontenc}

\usepackage[utf8]{inputenc}

\usepackage{microtype}
\usepackage{amsthm}
\newtheorem{proposition}{Proposition}
\newtheorem{theorem}{Theorem}

\usepackage{subcaption}
\usepackage{algorithm}
\usepackage{algorithmic}
\usepackage{amsthm}
\usepackage{amsmath} 
\usepackage{amssymb}

\usepackage{inconsolata}

\usepackage{graphicx}
\definecolor{delim}{RGB}{20,105,176}
\definecolor{numb}{RGB}{106, 109, 32}
\definecolor{string}{rgb}{0.64,0.08,0.08}
\definecolor{backcolour}{rgb}{0.95,0.95,0.92}

\definecolor{pykeyword}{RGB}{0, 0, 139}     
\definecolor{pystring}{RGB}{165, 42, 42}  
\definecolor{pycomment}{RGB}{85, 107, 47}  
\definecolor{pynumber}{RGB}{105, 105, 105}
\definecolor{darkred}{rgb}{0.5,0,0}
\definecolor{1}{HTML}{E67E22}
\definecolor{2}{HTML}{967aa1}
\definecolor{3}{HTML}{718355}
\definecolor{4}{HTML}{800f2f}
\definecolor{5}{HTML}{4a4e69}

\definecolor{6}{HTML}{2980B9}
\definecolor{7}{HTML}{C0392B}
\definecolor{8}{HTML}{9B59B6}
\usepackage{listings}

\lstdefinelanguage{json}{
    frame=single,
    rulecolor=\color{black},
    showspaces=false,
    showstringspaces=false,
    showtabs=false,
    breaklines=true,
    postbreak=\raisebox{0ex}[0ex][0ex]{\ensuremath{\color{gray}\hookrightarrow\space}},
    breakatwhitespace=true,
    basicstyle=\ttfamily\small,
    upquote=true,
    morestring=[b]",
    stringstyle=\color{string},
    literate=
     *{0}{{{\color{numb}0}}}{1}
}

\usepackage{makecell}

\usepackage{amsmath}
\usepackage{booktabs}
\usepackage{tcolorbox}
\usepackage{tcolorbox}
\tcbuselibrary{breakable}   
\usepackage{siunitx}        
\title{LigBench: A Unified and Human-Aligned Benchmark for LLM-based Research Idea Generation}

\author{
 \textbf{Chenrun Wang\textsuperscript{1,2,*}},
 \textbf{Mingxuan Zhu\textsuperscript{1,*}},
 \textbf{Tiancheng Huang\textsuperscript{1}},
 \textbf{Wenjie Li\textsuperscript{2}},
\\
 \textbf{Yujie Zhang\textsuperscript{2}},
 \textbf{Zichen Zhu\textsuperscript{1}},
 \textbf{Zhiying Zou\textsuperscript{1}},
 \textbf{Kai Yu\textsuperscript{1}},
 \textbf{Lu Chen\textsuperscript{1,2}}$^\dagger$,
\\
\\
 \textsuperscript{1}X-LANCE Lab, School of Computer Science, Shanghai Jiao Tong University, Shanghai,
China
\\
 \textsuperscript{2}Shanghai Innovation Institution, Shanghai, China
\\
\textsuperscript{*}Equal contribution
\qquad
$^\dagger$Corresponding author
}

\begin{document}
\maketitle
\begin{abstract}
With the rapid advancement of large language models (LLMs), research idea generation has attracted increasing attention. Existing approaches enable LLMs to retrieve relevant literature and propose novel ideas for research areas.  However, current evaluation practices for idea generation remain fragmented and lack objective standards, often relying on direct LLM scoring, which limits their ability to provide unified and reliable assessments across a coherent distribution of generated ideas. To address this challenge, we propose LigBench, an automated evaluation benchmark
that enables fine-grained and reliable evaluation of AI     research ideas, consistently applicable across different generation distributions. In addition, we introduce PAIR-IQ, a dataset tailored for training pairwise idea judgment models and serving as an auxiliary reference to support more objective comparative evaluation. Extensive experiments demonstrate that LigBench achieves stable and interpretable evaluations, significantly improving alignment with expert judgments. Furthermore, models trained on PAIR-IQ exhibit enhanced ranking accuracy and robustness, establishing a principled standard for scalable and objective research idea assessment. 

\end{abstract}

\section{Introduction}



The rapid evolution of LLMs~\citep{gpt4, gemini2.5, qwen3} has ushered in a new era of automated research idea generation. Building on the strong reasoning and knowledge synthesis capabilities of these models, a growing body of recent work~\citep{hypothesis, aiscientist, researchagent, manyhead} has proposed LLM-based frameworks for scientific ideation, enabling the generation of novel hypotheses, methodological variations, and promising research directions grounded in large-scale scientific literature. Such capabilities hold great promise for accelerating discovery across disciplines, guiding researchers toward unexplored directions, and reducing the time and cost associated with manual ideation. However, as the utility of these systems continues to grow, the lack of rigorous, objective, and reproducible evaluation methodologies remains a critical bottleneck for the systematic assessment of generated ideas.


Current evaluation strategies for idea generation are often fragmented. Some studies~\citep{zheng2023judging, nova, canllm} rely on direct scoring by LLMs, where models assign quality scores to their own or peer-generated ideas, while others~\citep{scipip, nova} depend on human experts to perform pairwise or ranking-based evaluations. Automated metrics such as novelty, diversity, or semantic similarity offer partial insights but fail to capture the multi-dimensional nature of research idea quality. Overall, these approaches lack consistency: they may be sensitive to prompt phrasing, suffer from inter-annotator variability, or provide only relative assessments. Consequently, there is no unified framework to benchmark idea generation systems comprehensively.

Pairwise comparison has emerged as a promising approach to sidestep some subjective biases inherent in absolute scoring. In this paradigm, two candidate ideas are presented side by side, and an evaluator (whether human or automated) selects the one deemed superior along specified criteria. Aggregating multiple pairwise comparisons can yield a ranking of ideas with higher consistency than solitary ratings. Nevertheless, pairwise evaluation alone does not constitute a full benchmarking solution: it requires a sufficiently large and diverse pool of test pairs, clear annotation protocols, and scalable tooling to collect judgments at scale.

To address these challenges, we introduce LigBench, an automated evaluation framework tailored for research idea assessment. LigBench operationalizes multi-dimensional quality criteria by orchestrating large-scale pairwise comparisons, metadata tracking, statistical aggregation, and automatic score updates into a cohesive pipeline. We drew on the thinking process experts use when evaluating an idea. Experts typically begin by searching for existing results related to the idea for reference or comparison. In addition, pairwise comparison alone is not sufficient to objectively and convincingly reflect the quality of an idea. Therefore, we have designed and provided a complete process based on pairwise comparison combined with the Elo mechanism to continuously update scores. We also present the PAIR-IQ dataset, a curated dataset of research papers' ideas containing structured idea representations. PAIR-IQ serves both as a gold-standard reference for comparative evaluation and as a training resource for models designed to predict pairwise judgments. 

Our main contributions are summarized as follows:

\begin{itemize}
    \item \textbf{Unified Evaluation Framework:} We introduce LigBench as a unified and extensible evaluation framework that systematically integrates data curation, idea retrieval, pairwise comparison, score propagation, and result aggregation within a single automated pipeline.
    \item \textbf{PAIR-IQ Dataset:} We release PAIR-IQ dataset, comprising over 11000 conference papers with debiased evaluation scores and formalized representations to support objective assessment of research ideas.
    \item \textbf{Open-Source Resources:} We will release the PAIR-IQ dataset together with the complete LigBench evaluation pipeline to facilitate reproducibility and enable community-driven extensions. The PAIR-IQ dataset is publicly available at \url{https://huggingface.co/datasets/USER3IjEBHj9/PAIR-IQ/tree/main}. The full LigBench evaluation pipeline will be released in the near future. 
    \item \textbf{Benchmarking and Analysis:} We conduct extensive experiments with state-of-the-art LLMs, demonstrating that LigBench can discern subtle differences in idea quality and offering insights into model strengths and weaknesses.
\end{itemize}

\section{Related Work}


Current LLM-based ideation systems typically employ evaluation protocols that are tightly coupled to their own generation pipelines, reflecting framework-specific assumptions rather than a unified or standardized assessment methodology. Existing approaches primarily rely on either direct scoring by large language models or judgments from human experts, and are often designed to operate only within the context of a particular ideation system. As a result, these evaluation methods lack the flexibility and generality required for consistent assessment across different models, prompting strategies, and idea distributions. For instance, SciPIP~\citep{scipip} relies on evaluations conducted by human researchers, limiting scalability and reproducibility. AI Idea Bench~\citep{ideabench} introduces an evaluation protocol that is heavily dependent on the underlying idea generation framework and does not support standalone assessment of individual ideas. CoI~\citep{coi} proposes a pairwise comparison approach, but restricts evaluation to comparisons among generated ideas, making objective comparison with human-authored research ideas challenging. Consequently, as highlighted by a recent survey~\citep{survey}, evaluation remains the most critical bottleneck in LLM-based scientific ideation.

\section{Method}
To provide a unified, objective, and human-aligned evaluation of research ideas, we propose \textbf{LigBench}, an automated benchmark designed for systematic idea assessment. As illustrated in Figure~\ref{fig:ligbench_pipeline}, the evaluation of each idea is decomposed into four aspects: \textit{rating}, \textit{contribution}, \textit{soundness}, and \textit{novelty}, with each aspect scored on a \textbf{0-5 scale}. Here, \textit{rating} represents the overall quality as perceived by reviewers, \textit{contribution} reflects the significance of the idea within its research domain, \textit{soundness} captures methodological rigor and validity, and \textit{novelty} measures the originality of the proposed idea.

\begin{figure*}[t]
  \centering
  \includegraphics[width=\textwidth,trim=0 0 0 0,clip]{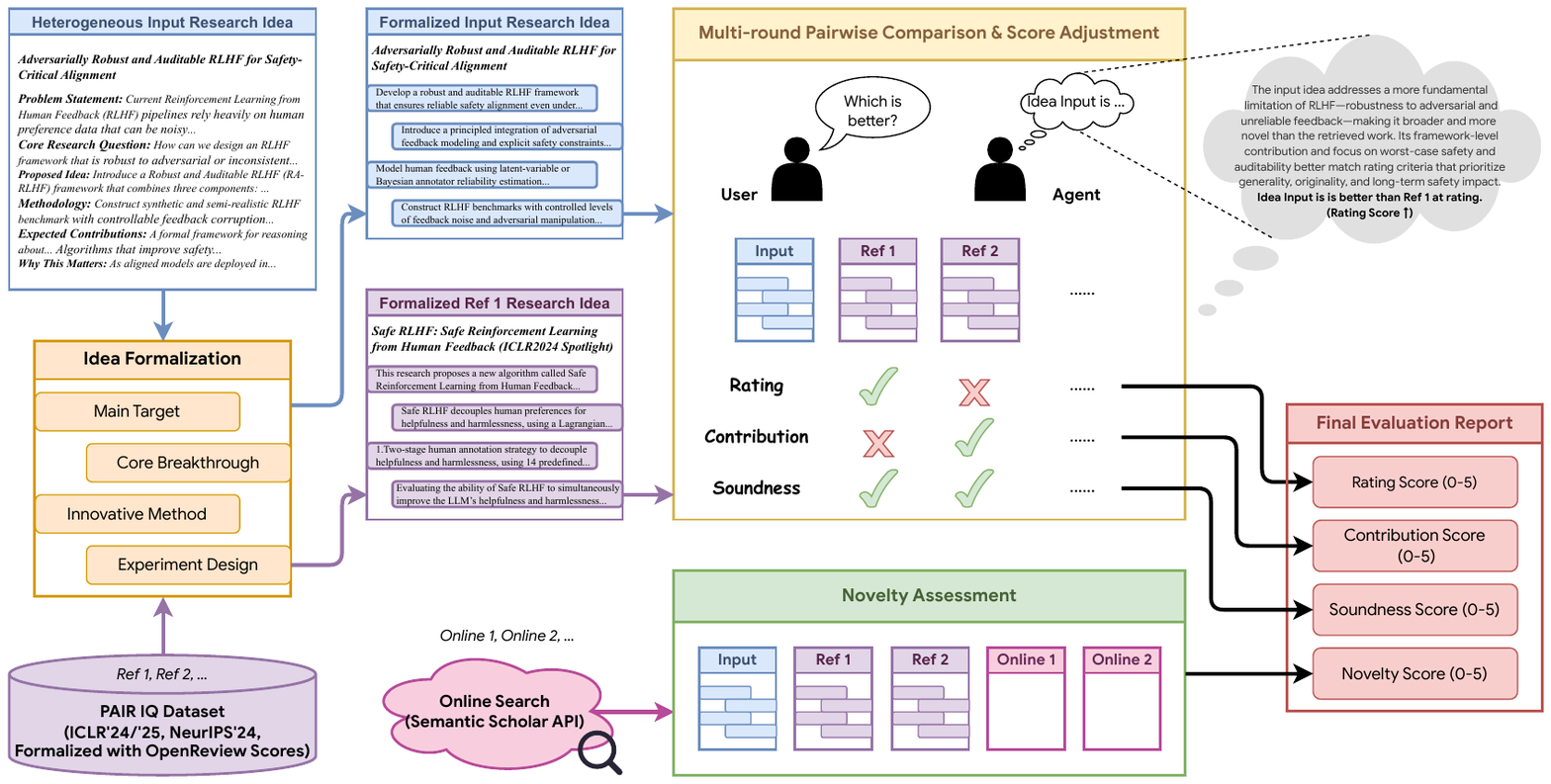} \hfill
  \caption {Overview of the LigBench evaluation pipeline. Given a target idea, LigBench performs idea formalization, retrieves related papers, conducts LLM-based pairwise comparisons, and iteratively updates scores using PAIR-IQ until convergence across multiple evaluation dimensions.}
  \label{fig:ligbench_pipeline}
\end{figure*}

Following the pipeline shown in Figure~\ref{fig:ligbench_pipeline}, each idea is first transformed into a formalized representation and then iteratively compared against similar papers retrieved from the database using a large language model. In each comparison, the model evaluates the relative quality between idea pairs, and the resulting judgments are aggregated with the objective scores from the PAIR-IQ dataset to update the score of the target idea. This process is repeated until convergence, producing a final score that is consistent across different sources and aligned with human preferences.

\subsection{Idea Formalization}
To begin with, a fundamental challenge in building a unified idea evaluation framework is the heterogeneity in the formats of the ideas under evaluation. Due to inherent differences in the distributions of ideas generated by different models and frameworks, directly comparing these ideas is prone to bias. For instance, when using large language models as evaluators, they often tend to favor ideas that are described in greater detail, potentially overlooking those that are more concise yet fundamentally more innovative or valuable. Moreover, different studies are conducted under diverse frameworks, leading to substantial variations in how final ideas are presented, including their structure and formatting~\citep{chiang-lee-2023-large,liu-etal-2023-g}, which further complicates fair and consistent comparison across models and methods.

To mitigate these issues, it is necessary to standardize ideas across different distributions. Here, we focus on the core aspects and boundaries of each idea and decouple it into four distinct components:
\begin{itemize}
    \item \textbf{Main Target}: a concise, one-sentence summary of the task and objective that the idea aims to address.
    \item \textbf{Core Breakthrough}: the key advancement or novel contribution introduced by the idea compared to prior work.
    \item \textbf{Innovative Methods}: the concrete methodological approaches or techniques proposed to achieve the target.
    \item \textbf{Experimental Design}: the experimental setup and evaluation plan designed to validate the effectiveness of the proposed methods.
\end{itemize}

Formally, let $I \in \mathcal{I}$ denote an input idea from the idea space. We define an LLM-based decomposition operator $\mathcal{D}_{\text{LLM}} : \mathcal{I} \rightarrow \mathcal{T} \times \mathcal{B} \times \mathcal{M} \times \mathcal{E}$ such that
\begin{equation}
\label{eq: formalize}
(T, B, M, E) = \mathcal{D}_{\text{LLM}}(I),
\end{equation}
where $T \in \mathcal{T}$ represents the main target, $B \in \mathcal{B}$ denotes the core breakthrough, $M \in \mathcal{M}$ corresponds to the innovative methods, and $E \in \mathcal{E}$ specifies the experimental design. This formulation casts idea formalization as a structured, LLM-driven mapping into a unified representation space, facilitating systematic comparison and subsequent quantitative evaluation. The concrete prompting strategy used for this formalization is provided in Appendix~\ref{app: formalization}.

By explicitly disentangling complementary semantic facets of an idea into structured components, this LLM-driven decomposition yields a unified yet fine-grained representation that captures the substantive content while abstracting away superficial variations in expression and format. Consequently, it mitigates evaluation bias arising from heterogeneous idea sources and presentation styles, enabling more objective and consistent comparison across models and frameworks. Furthermore, this structured representation establishes a principled foundation for subsequent quantitative analyses, including assessments of soundness, novelty, contribution, and other relevant criteria.

\subsection{PAIR-IQ Construct}

To better evaluate research ideas according to human preferences, we collected and constructed the \textbf{PAIR-IQ dataset}, comprising over 11,000 papers from \textbf{ICLR 2024, ICLR 2025, and NeurIPS 2024}, including oral, spotlight, poster, and rejected papers. For each paper, we retrieved \textbf{ratings, contribution scores, and soundness scores from OpenReview}, and projected each score to a standardized 0–5 range to ensure comparability across different venues and scoring scales.

Figure~\ref{fig:dataset_statistics} illustrates the composition of the PAIR-IQ dataset. As shown in Figure~\ref{fig:category_dist}, the dataset draws from three major machine learning venues, with ICLR 2025 contributing the largest proportion (36.7\%), followed by ICLR 2024 (32.4\%) and NeurIPS 2024 (30.9\%). Figure~\ref{fig:conf_dist} presents the distribution across acceptance categories: poster papers constitute the majority (57.3\%), followed by rejected submissions (36.6\%), spotlight papers (4.3\%), and oral presentations (1.8\%). This distribution ensures that the dataset captures a wide spectrum of paper quality levels, from top-tier accepted works to borderline and rejected submissions. Figure~\ref{fig:topic_dist} further depicts the topical diversity of the dataset, spanning 12 research areas. Large language models (LLM) represent the most prevalent topic (19.3\%), followed by training methodologies (15.4\%) and reinforcement learning (13.4\%), reflecting the current research trends in the machine learning community. Detailed information and a specific example of the dataset are provided in Appendix~\ref{app:PAIR-IQ extra information}.

\begin{figure*}[t]
    \centering
    \begin{subfigure}[b]{0.45\textwidth}
        \centering
        \includegraphics[width=\textwidth]{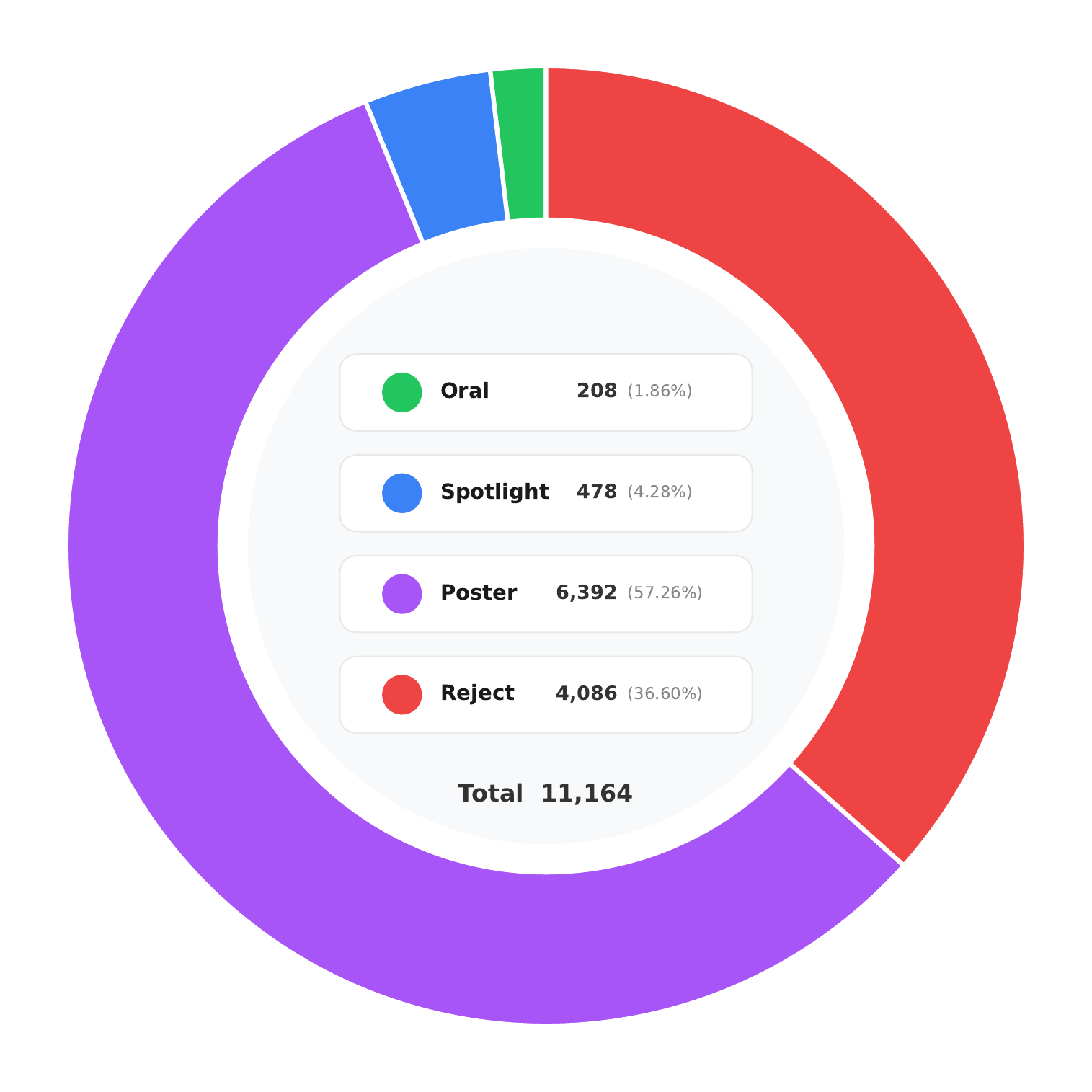}
        \caption{Distribution by acceptance category.}
        \label{fig:conf_dist}
    \end{subfigure}
    \hfill
    \begin{subfigure}[b]{0.45\textwidth}
        \centering
        \includegraphics[width=\textwidth]{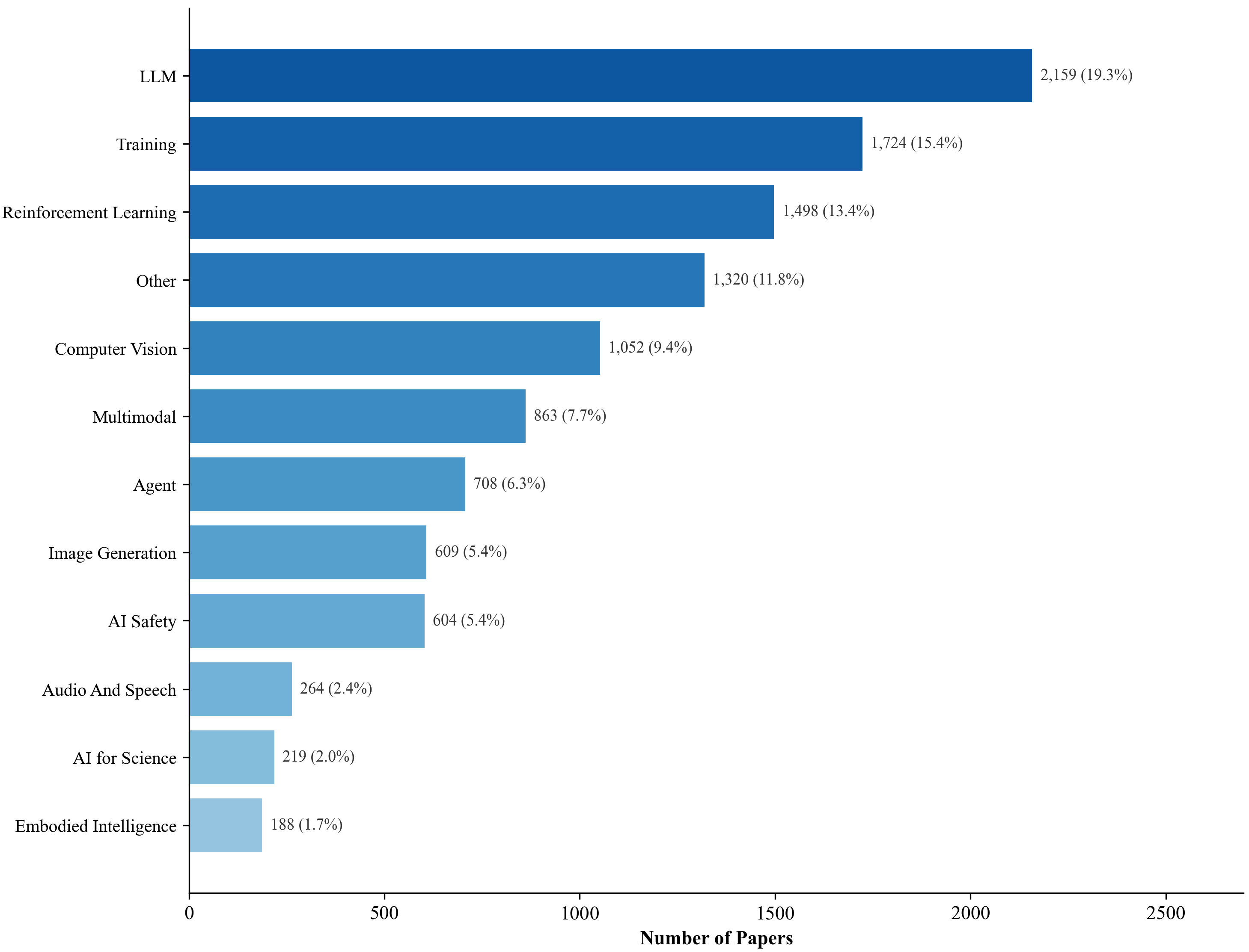}
        \caption{Distribution by research topic.}
        \label{fig:topic_dist}
    \end{subfigure}
    \caption{Statistical overview of the PAIR-IQ dataset. The dataset comprises 11,164 papers collected from three major machine learning conferences (ICLR 2024, ICLR 2025, and NeurIPS 2024), spanning diverse acceptance categories and research topics. }
    \label{fig:dataset_statistics}
\end{figure*}

Following the idea formalization framework introduced in the previous subsection, we now describe in detail how we extract and formalize research ideas from full academic papers. For each paper, we parsed all textual content and employed a LLM to extract structured components, including Main Target, Core Breakthrough, Innovative Methods, and Experimental Design. These components provide a standardized representation of each research idea, enabling consistent downstream evaluation and facilitating subsequent scoring procedures. A more detailed description of the prompting strategy and extraction process is provided in Appendix~\ref{app:paper_formalization}.

Next, we address potential biases in the collected ratings. Since different venues and years may exhibit systematic differences in reviewer behavior, the resulting score distributions can vary significantly across the dataset. To ensure that the scores of all papers are comparable and the overall distribution is consistent, we perform \textbf{score debiasing} using a mean-shifting procedure.

\begin{table}
 \centering
\resizebox{0.48\textwidth}{!}{
 
  \begin{tabular}{lccc}
    
\toprule
     & \textbf{Rating} & \textbf{Contribution} & \textbf{Soundness} \\
    \midrule
    ICLR 2024     & 2.740 & 2.689 & 2.472          \\
    ICLR 2025    & 2.821 & 2.687 & 2.498          \\
    NeurIPS 2024     & 2.918 & 2.914 & 2.721           \\
    all     & 2.825 & 2.758 & 2.559           \\
    
\bottomrule
  \end{tabular}
 }
  \caption{Average scores of papers from different venues in terms of rating, contribution, and soundness. The values highlight systematic differences across conferences and years, motivating the need for score debiasing. }
  \label{tab:debia}

\end{table}

Specifically, let $s_{i}^{(v)}$ denote the original score (rating, contribution, or soundness) of paper $i$ from venue $v$, with venue mean $\mu_v$ and overall mean $\mu_{\text{all}}$. The debiased score is computed as

\begin{equation}
\label{eq: score_debias}
\hat{s}_{i}^{(v)} = s_{i}^{(v)} - \mu_v + \mu_{\text{all}}.
\end{equation}
This procedure aligns the score distributions across different venues and years, ensuring that subsequent evaluations reflect the intrinsic quality of each idea rather than systematic biases in reviewer ratings. Additionally, the dataset was further processed through embedding computation to facilitate downstream evaluation tasks.

\subsection{Pairwise Comparison}
\label{sec:pairwise}

When a new research idea is submitted for evaluation, we first transform it into a formalized representation using the same procedure described in the previous subsection. This ensures that the target idea and reference papers are represented in a unified format.

Based on this representation, we retrieve semantically related papers from the database using a parallel retrieval strategy that combines keyword matching and embedding-based similarity. The target idea is then compared pairwise with each retrieved paper using a LLM, which assesses relative quality along three dimensions: \textit{rating}, \textit{contribution}, and \textit{soundness}. The specific score initialization and the comparison prompt are provided in Appendix~\ref{pair com}.

We update the target idea's scores using an adapted Elo rating algorithm~\citep{elo1978rating}. In standard Elo, the expected probability that idea $A$ wins over idea $B$ is:
\begin{equation}
    E_A = \frac{1}{1 + 10^{(s_B - s_A) / d}},
\end{equation}
where $s_A$ and $s_B$ denote the current scores of ideas $A$ and $B$, and $d$ is a scaling parameter. Given the observed outcome $S_A \in \{0, 0.5, 1\}$, the score is updated as:
\begin{equation}
    s_A^{\text{new}} = \mathcal{C}\left(s_A + K \cdot (S_A - E_A)\right),
\end{equation}
where $K$ is an adaptive update factor and $\mathcal{C}(\cdot)$ is a soft clamping function that constrains scores to $[0, 5]$.

Through multiple rounds of pairwise comparisons, the target idea's scores gradually converge. The evaluation terminates when score changes fall below a predefined threshold, yielding the final assessment. A detailed mathematical analysis of the algorithm, including convergence properties and parameter selection, is provided in Appendix~\ref{appendix:elo}.

\subsection{Novelty Assessment}
\label{sec:novelty}

Through the above procedures, we obtain objective evaluations of research ideas in terms of \textit{rating}, \textit{contribution}, and \textit{soundness}. However, these dimensions do not explicitly capture the \textit{novelty} of an idea. Unlike other evaluation dimensions that can be assessed through comparative analysis, novelty requires determining whether an idea introduces genuinely new insights compared to existing literature.

To address this challenge, we design a novelty assessment module that combines LLM-based initial estimation with similarity-based quantitative analysis. The evaluation proceeds in three stages: (1) obtaining an initial novelty estimate $s_{\text{novelty}}^{(0)}$ from the LLM evaluator; (2) retrieving semantically related papers via the Semantic Scholar API~\citep{kinney2023semantic} and computing a weighted similarity metric $s_{\text{weighted}}$, which is then mapped to a novelty score $s_{\text{sim}}$ through an inverse sigmoid function; and (3) fusing both assessments into the final score:
\begin{equation}
    s_{\text{novelty}}^{\text{final}} = \beta \cdot s_{\text{sim}} + (1-\beta) \cdot s_{\text{novelty}}^{(0)},
\end{equation}
where $\beta=0.7$ assigns higher weight to the similarity-based calculation, reflecting its objectivity, while retaining the LLM's judgment to capture aspects not fully represented by semantic similarity alone. A detailed description of the similarity computation and parameter selection is provided in Appendix~\ref{appendix:novelty}.

\section{Experiments}
\subsection{Model Training}

We also fine-tune models of smaller sizes to investigate whether the scientific "taste" of an area can be learned through knowledge distillation~\citep{hinton2015distilling}. Following Section~\ref{sec:pairwise_evaluation}, we sample \num{15000} thematically similar pairs while ensuring none of the selected papers appear in the test data. To help small models better understand the underlying domain knowledge, we adopt an additional chain-of-thought~\citep{cot} generated by larger models based on ground-truth labels. We choose Qwen2.5-72B-Instruct~\citep{qwen25} as the teacher model, and smaller models from the same family as the student models. As for the training framework, we employ LLaMA-Factory~\citep{llamafactory}, and further details are presented in Appendix~\ref{app:training_hyperparameter}.

\subsection{Pairwise Evaluation}

\label{sec:pairwise_evaluation}

The accuracy of pairwise judgments made by the LLM serves as a key guarantee for the reliability of the entire evaluation framework. To assess this reliability, we conduct a dedicated pairwise evaluation on the three dimensions: \textit{rating}, \textit{contribution}, and \textit{soundness}. The goal of this evaluation is to verify whether the LLM can correctly determine the relative quality between two research ideas under each criterion.

For this purpose, we randomly sample a held-out test set consisting of 269 pairs of thematically similar papers from the database, ensuring that none of the selected papers appear in the training data. Thematic similarity is determined using the same hybrid retrieval strategy based on keyword matching and embedding-based similarity. Each sampled pair is presented to the LLM in its formalized representation, and the model is asked to judge which idea performs better on each of the three dimensions.

The ground-truth labels for pairwise comparison are derived from the \textbf{debiased OpenReview scores}. Specifically, for each paper pair, we compare their debiased scores on \textit{rating}, \textit{contribution}, and \textit{soundness}, and treat the resulting relative ordering as the reference outcome. The accuracy of the LLM’s pairwise judgments is then measured by comparing its predictions against these ground-truth relationships, as summarized in Table~\ref{tab:pairwise_eval}.

\begin{table}
\small
\setlength{\tabcolsep}{5pt}
\renewcommand{\arraystretch}{1.0}
  \centering
  \begin{tabular}{lccc}
    \toprule
    \textbf{Model} & \textbf{Rating} & \textbf{Contrib.} & \textbf{Sound.} \\
    \midrule
    Qwen2.5-7B     & 0.514 & 0.496 & 0.488          \\
    Qwen2.5-14B    & 0.506 & 0.598 &0.541\\
    GPT-4o & 0.549 & 0.615 & 0.747\\
    GPT-4.1     & 0.539 & 0.545 & 0.649         \\
    GPT-5     & \textbf{0.820} & 0.695 & \textbf{0.826}           \\
    GPT-5.2 & 0.801 & 0.707 & 0.803 \\
    Gemini-2.5-pro & 0.706 & 0.607 & 0.750 \\
    Gemini-3-pro & 0.806 & 0.\textbf{713} &  0.814\\
    \midrule
    Qwen2.5-7B(trained)     & 0.714 & 0.710 & 0.712           \\
    Qwen2.5-14B(trained)     & 0.755 & 0.701 & 0.751           \\
    \bottomrule
  \end{tabular}
  \caption{Accuracy of LLM-based pairwise judgments on \textit{rating}, \textit{contribution}, and \textit{soundness}, evaluated against debiased OpenReview ground-truth labels.}
  \label{tab:pairwise_eval}
\end{table}

Overall, stronger models consistently achieve higher accuracy across all three dimensions, indicating that pairwise judgment is a non-trivial capability that benefits from increased model capacity and reasoning ability. Among the off-the-shelf models, GPT-5 substantially outperforms other baselines, achieving accuracies above 0.80 on both \textit{rating} and \textit{soundness}, which suggests a high level of agreement with debiased human judgments.

In contrast, smaller open-source models such as Qwen2.5-7B and Qwen2.5-14B exhibit limited performance, with accuracies close to random guessing in some dimensions, particularly on \textit{rating} and \textit{soundness}. This highlights the difficulty of reliably performing fine-grained comparative evaluation without sufficient model capacity or task-specific supervision.

Notably, models trained on the PAIR-IQ dataset show substantial improvements across all dimensions. Both trained variants achieve consistently higher accuracies than their untrained counterparts, demonstrating that targeted pairwise supervision effectively enhances the model’s ability to align with human evaluation criteria. These results validate the design of the PAIR-IQ dataset and further support the use of pairwise judgment as a reliable component within the LigBench evaluation framework.

Although the pairwise judgment accuracy does not reach near-perfect levels, this outcome also reflects the intrinsic difficulty of the task, as distinguishing fine-grained differences between closely related research ideas is inherently challenging. Nevertheless, when applied within the LigBench framework, even imperfect pairwise judgments can still lead to reliable final evaluations. Starting from a strong base model, repeated score updates over a large number of pairwise comparisons allow the target scores to gradually converge toward reasonable values.

\begin{table}
\small
  \centering
  \begin{tabular}{ccc}
    \toprule
     \textbf{Rating} & \textbf{Contrib.} & \textbf{Sound.} \\
    \midrule
     0.114 & 0.196 & 0.138          \\
    \bottomrule
    
  \end{tabular}
  \caption{Mean debiased score differences for incorrectly judged paper pairs by \textbf{GPT-5} in the pairwise evaluation. The small score gaps indicate that errors have limited impact on iterative score updates.}
  \label{tab:error_analysis}
\end{table}

Further analysis of the error cases, summarized in Table~\ref{tab:error_analysis}, reveals that most incorrect judgments occur when the debiased scores of the two compared papers are very close. This observation suggests that even when the LLM makes an incorrect decision, the resulting update magnitude is limited. According to the score update mechanism described in the previous section, such small score differences lead to correspondingly small update steps, thereby preventing severe error accumulation during the iterative evaluation process.

\subsection{Human Alignment}
Although the pairwise judgment capability of the large language model has been validated using debiased OpenReview scores, it remains essential to perform human verification. This is because even debiased scores cannot guarantee a perfectly objective assessment, and expert judgment is required to ensure alignment with human evaluation standards.

To further verify human alignment, we recruited several PhD-level AI researchers to conduct pairwise comparisons on \textbf{100 randomly sampled idea pairs}. For each pair, experts selected the superior idea along the three dimensions: \textit{rating}, \textit{contribution}, and \textit{soundness}. The resulting human judgments were then compared with the LLM’s pairwise predictions. The alignment results are summarized in Table~\ref{tab:human_alignment}, indicating a strong consistency between the LLM and expert evaluations.

   
    

\begin{table}[t]
\centering

\begin{minipage}{0.45\textwidth}
\centering
\small
\begin{tabular}{ccc}
\toprule
\textbf{Rating} & \textbf{Contrib.} & \textbf{Sound.} \\
\midrule
71\% & 79\% & 73\% \\
\bottomrule
\end{tabular}
\caption{Agreement between LLM-based pairwise judgments and human expert evaluations across three assessment dimensions.}
\label{tab:human_alignment}
\end{minipage}
\hfill
\begin{minipage}{0.5\textwidth}
\centering
\small
\begin{tabular}{lcc}
\toprule
 & \textbf{Accept} & \textbf{Reject} \\
\midrule
Rating & 2.547 & 2.043 \\
Contribution & 2.432 & 2.030 \\
Soundness & 1.853 & 1.542 \\
Novelty & 2.627 & 2.234 \\
\bottomrule
\end{tabular}
\caption{Evaluation of 50 NeurIPS 2025 papers using LigBench.}
\label{tab:nips2025_evaluation}
\end{minipage}

\end{table}

As shown in Table~\ref{tab:human_alignment}, the LLM exhibits substantial agreement with human experts across all three dimensions. In particular, the highest alignment is observed for \textit{contribution}, indicating that the model is effective at assessing the relative significance of research ideas within a domain. The consistently strong agreement on \textit{rating} and \textit{soundness} further suggests that the LLM’s pairwise judgments capture key aspects of overall quality and methodological rigor. These results provide additional evidence that LigBench produces evaluations that are well aligned with expert human preferences, complementing the automated assessments based on debiased review scores.

\subsection{Paper Evaluation}
We conducted an evaluation on papers from \textbf{NeurIPS 2025} to demonstrate the effectiveness of our framework. The evaluation applied the full LigBench procedure. The results of this evaluation are summarized in Table~\ref{tab:nips2025_evaluation}, highlighting the performance of the framework across the different assessment dimensions.


    

As shown in Table~\ref{tab:nips2025_evaluation}, papers that were accepted at NeurIPS 2025 consistently receive higher scores across all four evaluation dimensions compared to rejected papers. The largest differences are observed in \textit{novelty} and \textit{rating}, suggesting that both originality and overall perceived quality are strong determinants of acceptance. Differences in \textit{contribution} and \textit{soundness} are also notable, indicating that methodological rigor and domain impact are effectively captured by LigBench.

These results demonstrate that LigBench can meaningfully distinguish higher-quality research ideas from lower-quality ones, even when applied to a relatively small test set of 50 papers. This alignment with actual acceptance outcomes provides preliminary validation of the framework’s effectiveness in reflecting human and peer-review judgments.

\subsection{Impact of Data Source and Debiasing}

To examine the consistency of our evaluation framework across different data sources, we analyze the impact of the three major sources in the PAIR-IQ dataset: \textbf{ICLR 2024}, \textbf{ICLR 2025}, and \textbf{NeurIPS 2024}. Although these venues differ in review styles and score distributions, our debiasing procedure aims to normalize such discrepancies and produce a unified reference space.

To validate this, we conduct an ablation study where the score updating process is performed using only a single source at a time. Specifically, we independently use papers from ICLR 2024, ICLR 2025, and NeurIPS 2024 as the comparison pool for pairwise evaluation and iterative score updating. The differences between the scores obtained using a single source and those obtained using all sources together are summarized in Table~\ref{tab:data_source_analysis}.


\begin{table}[ht]
\centering
\begin{minipage}[t]{0.48\textwidth}
\centering
\begin{tabular}{lccc}
\toprule
Conference & Rating & Contrib. & Sound. \\
\midrule
ICLR 2024 & $\pm$3.2\% & $\pm$0.9\% & $\pm$1.5\% \\
ICLR 2025 & $\pm$2.0\% & $\pm$4.1\% & $\pm$3.4\% \\
NeurIPS 2024 & $\pm$2.3\% & $\pm$2.0\% & $\pm$2.6\% \\
\bottomrule
\end{tabular}
\caption{Impact of individual data sources compared to all sources. Percentage differences ($\pm$) between using a single data source versus all sources. Rows are conferences, columns are evaluation metrics.}
\label{tab:data_source_analysis}
\end{minipage}%
\hfill
\begin{minipage}[t]{0.48\textwidth}
\centering
\small
\begin{tabular}{lcccc}
\toprule
Model & Rating & Contrib. & Sound. & Novelty \\
\midrule
CoI & 3.468 & 2.996 & 2.487 & 2.474 \\
SciPIP & 2.621 & 1.900 & \textbf{3.243} & 2.267 \\
\midrule
GPT-5.2 & \textbf{3.974} & \textbf{3.392} & 3.022 & \textbf{2.583} \\
GPT-5 & 3.686 & 3.210 & 2.866 & 2.528 \\
GPT-4.1 & 1.000 & 0.760 & 1.877 & 2.194 \\
GPT-4o & 0.672 & 0.514 & 1.264 & 1.877 \\
\bottomrule
\end{tabular}
\caption{Evaluation of idea-generation frameworks and standalone LLMs using LigBench. Scores are averaged across 110 generated ideas per model. CoI and SciPIP both use GPT-5 as the backbone model.}
\label{tab:benchmark}
\end{minipage}
\end{table}

As shown in the table, the differences between scores computed from individual sources and those computed from all sources are minor, with all metrics showing deviations within a few percentage points. This indicates that our evaluation results are largely insensitive to the choice of data source. Consequently, the debiasing strategy effectively mitigates distributional differences across venues, enabling stable and comparable evaluation regardless of whether a single source or all sources are used.

\subsection{Benchmark and Model Evaluation}
To comprehensively assess the quality of LLM-generated research ideas, we evaluate both standalone LLMs and idea-generation frameworks using LigBench. Each system is prompted to freely generate 10 ideas across 11 research topics, including Reinforcement Learning, Image Generation, AI Safety, Agent, Computer Vision, Audio and Speech, Embodied Intelligence, Multimodal, AI for Science, Large Language Models, and Training. Final scores are computed as the average across all generated ideas.

We compare two representative idea-generation frameworks, CoI\citep{coi} and SciPIP\citep{scipip}, against direct LLM baselines. Both employ GPT-5 as their backbone, enabling a controlled comparison that isolates the effect of framework design. From the results in Table~\ref{tab:benchmark}, several key observations can be drawn regarding the performance of standalone LLMs and idea-generation frameworks:

First, among standalone LLMs, GPT-5.2 and GPT-5 clearly outperform GPT-4.1 and GPT-4o across all dimensions, with GPT-4o scoring below 1.0 on rating and contribution, indicating that earlier models struggle to meet basic quality thresholds.

Second, idea-generation frameworks do not consistently outperform direct prompting on strong backbone models. Despite using GPT-5, CoI and SciPIP often score lower than standalone GPT-5, suggesting that these frameworks may optimize weaker models but introduce constraints for capable ones. Notably, SciPIP achieves the highest Soundness score among all evaluated systems, slightly exceeding its underlying model GPT-5.

Finally, while novelty scores broadly follow expectations, absolute differences are small, highlighting that generating truly novel ideas remains challenging for current LLMs. For an intuitive overview of model performance across all metrics, see the radar visualization in Appendix~\ref{app:radar}.

\section{Conclusion}

We introduced \textbf{LigBench}, a unified and human-aligned benchmark for evaluating research ideas generated by large language models and agentic frameworks. By decomposing idea quality into four dimensions—\textit{rating}, \textit{contribution}, \textit{soundness}, and \textit{novelty}—LigBench enables consistent and fine-grained assessment across diverse sources.

To support reliable evaluation, we constructed the \textbf{PAIR-IQ} dataset with standardized, debiased scores, enabling effective pairwise judgment and robust score updating. Results show that LigBench produces stable evaluations aligned with expert preferences, even when individual pairwise judgments are imperfect.

LigBench provides a practical foundation for benchmarking future idea-generation systems and advancing automated evaluation of scientific creativity. It can also serve as a reliable reward signal for training idea-generation models via reinforcement learning, enabling more scalable and human-aligned AI-driven idea generation.


\bibliography{main}
\clearpage
\newpage

\appendix




\section{Hyper-parameters for Fine-Tuning}
\label{app:training_hyperparameter}


\begin{table}[htbp]
\centering
\caption{Hyper-parameters for Fine-Tuning.}
\begin{tabular}{l c}
    \toprule
    \textbf{Hyper-Parameter} & \textbf{Default Value} \\
    \midrule
    Finetuning Type & LoRA \\
    LoRA Target & all \\
    LoRA Rank & \num{16} \\
    LoRA Alpha & \num{16} \\
    LoRA Dropout & \num{0.05} \\
    \midrule
    Cutoff Length & \num{4096} \\
    \midrule
    Gradient Accumulation Steps & \num{16} \\
    Learning Rate & \num{1e-4} \\
    Train Epochs & \num{1.0} \\
    Learning Rate Scheduler & Cosine \\
    Warmup Ratio & \num{0.1} \\
    \bottomrule
\end{tabular}
\label{tab:training_hyperparameter}
\end{table}

\section{Prompt Specification}

\subsection{Heterogeneous Idea Formalization}
\label{app: formalization}
\begin{tcolorbox}[title={Heterogeneous Idea Formalization PROMPT}, colback=white,colframe=2,arc=1mm,boxrule=1pt,left=1mm,right=1mm,top=1mm,bottom=1mm]
\small
\color{2}
You are an expert agent specialized in research idea analysis and evaluation.

You will be provided with a raw research idea description. Your task is to formalize this idea into a structured representation that facilitates systematic comparison and evaluation.

Each research idea should be decomposed into the following four components:
	1.	Main Target:
A concise, one-sentence summary of the core task and research objective that the idea aims to address.
	2.	Core Breakthrough:
The key conceptual or technical advancement introduced by the idea, highlighting what fundamentally differentiates it from prior work.
	3.	Innovative Methods:
The concrete methodological approaches, algorithms, or techniques proposed to realize the main target.
	4.	Experimental Design:
The experimental setup and evaluation strategy used to validate the proposed methods, including datasets, benchmarks, baselines, and evaluation metrics when applicable.

Now, the new idea is:
<{new\_idea}>

Please formalize it into the four parts.

Output Format: A list of four strings in the following order:
[
  "<main\_target>",
  "<core\_breakthrough>",
  "<innovative\_method\_1>; <innovative\_method\_2>; <innovative\_method\_3>; ...",
  "<experimental\_design>"
]

Example:
[
  "To improve the robustness of large language models against distribution shifts",
  "Introducing a unified robustness-aware training objective that jointly models uncertainty and domain variance",
  "Uncertainty-guided representation learning; Domain-adaptive regularization; Robust contrastive pretraining",
  "Evaluation on multiple out-of-distribution NLP benchmarks with comparisons to standard fine-tuning and domain adaptation baselines"
]

\end{tcolorbox}

\subsection{Paper Idea Formalization}
\label{app:paper_formalization}
When constructing the PAIR-IQ dataset, we need to extract standardized research ideas from existing papers, which differs from directly normalizing heterogeneous, free-form ideas. Specifically, we first parse the full textual content of each paper and use an LLM to map diverse and heterogeneous section headings into five canonical categories: \textit{Abstract}, \textit{Introduction}, \textit{Related Work}, \textit{Method}, and \textit{Experiment}. Based on this unified structure, we perform targeted extraction for different components of an idea from the corresponding sections. The \textit{Main Target} is primarily extracted from the \textit{Abstract}. The \textit{Core Breakthrough} is derived from the \textit{Abstract}, \textit{Introduction}, and \textit{Related Work}. The \textit{Innovative Methods} are extracted from the \textit{Abstract} and \textit{Method}. The \textit{Experimental Design} is obtained from the \textit{Experiment} section. The detailed prompts used for this process are provided below.

\begin{tcolorbox}[title={Section Mapping PROMPT}, colback=white,colframe=3,arc=1mm,boxrule=1pt,left=1mm,right=1mm,top=1mm,bottom=1mm]
\small
\color{3}
You will get a list of section names from a research paper. Please identify which section(s) correspond to each of the following: abstract, introduction, related work, method, and experiment.

Note: Sometimes a part (e.g., method) may be split across multiple sections. In such cases, please use a list to record all matching section names, e.g., "method": ["Methodology", "Approach"].
Note: The "related work" section may also describe existing problems in the field, not just previous approaches.
The answer format MUST be a JSON object with keys "abstract", "introduction", "related\_work", "method", and "experiment". Each value should be either a list of section names (if present) or null (if not present). For example:
\{\{
    "abstract": ["Abstract"],
    "introduction": ["Introduction"],
    "related\_work": ["Related Work"],
    "method": ["Methodology", "Approach"],
    "experiment": ["Experiment"]
\}\}
Now, the section name list is: ```\{section\_names\}'''.

\end{tcolorbox}

\begin{tcolorbox}[title={Main Target PROMPT}, colback=white,colframe=3,arc=1mm,boxrule=1pt,left=1mm,right=1mm,top=1mm,bottom=1mm]
\small
\color{3}
Here you will get the abstract part of a research paper. You need to return the main target of this paper in one sentence, such as "This research designs a new model for image classification." or "This research proposes a new dataset for object detection" or "This research introduces a new framework for idea generation" and so on.

The answer format MUST be a simple string without any explanation or extra content.

Now , the abstract is: ```\{abstract\}'''.

\end{tcolorbox}

\begin{tcolorbox}[title={Core Breakthrough PROMPT}, colback=white,colframe=3,arc=1mm,boxrule=1pt,left=1mm,right=1mm,top=1mm,bottom=1mm]
\small
\color{3}
Here you will get three sections of a research paper: abstract, introduction, and related work. You need to return the core breakthrough of this paper in one sentence, such as what existing problems are solved or what advances are made compared to previous work.
    The answer format MUST be a simple string without any explanation or extra content.
    Now, the abstract part is: ```\{abstract\}'''.
    The introduction part is: ```\{introduction\}'''.
    The related work part is: ```\{related\_work\}'''.

\end{tcolorbox}

\begin{tcolorbox}[title={Innovative Methods PROMPT}, colback=white,colframe=3,arc=1mm,boxrule=1pt,left=1mm,right=1mm,top=1mm,bottom=1mm]
\small
\color{3}
Here you will get the abstract and method sections of a research paper. Please list three to four of the most innovative and core methods proposed in this paper, each as a separate numbered item. Keep your answer concise and focus on the key elements only.
    The answer format MUST be a simple numbered list (e.g., "1. ... 2. ...") without any explanation or extra content.
    Now, the abstract part is: ```\{abstract\}'''.
    The method part is: ```\{method\}'''.

\end{tcolorbox}

\begin{tcolorbox}[title={Experimental Design PROMPT}, colback=white,colframe=3,arc=1mm,boxrule=1pt,left=1mm,right=1mm,top=1mm,bottom=1mm]
\small
\color{3}
Here you will get the abstract and experiment sections of a research paper. Please list the main experimental designs proposed in this paper, each as a separate numbered item. Ignore any results, numbers, or metrics mentioned in the experiment section; only describe what experiments were designed.
    The answer format MUST be a simple numbered list (e.g., "1. ... 2. ...") without any explanation or extra content.
    Now, the abstract part is: ```\{abstract\}'''.
    The experiment part is: ```\{experiment\}'''.

\end{tcolorbox}

\subsection{Pairwise Comparison}
\label{pair com}

\begin{tcolorbox}[title={Initial Scoring with Examples PROMPT}, colback=white,colframe=2,arc=1mm,boxrule=1pt,left=1mm,right=1mm,top=1mm,bottom=1mm]
\small
\color{2}

You are an intelligent agent who is expert in scientific hypothesis evaluation.

A research idea can be decomposed into four part: Main Target, Core Breakthrough, Innovative Approach, and Experimental Design.

Here you will get a new idea and three published papers' ideas. You need to score the new idea in four dimensions: Rating, contribution, Soundness, and Novelty.

Dimension Definitions:

- Rating: The overall quality of the idea and a comprehensive evaluation

- Contribution: Whether the outcomes will be useful for future work

- Soundness: The logical and methodological integrity of the research

- Novelty: The originality and potential to contribution new insights or approaches

We have scored these three published papers in four dimensions. The "comments" record the reasons for the scores.

Please read these three published papers' scores and comments carefully, and then score the new idea.

New Idea:
```
Main Target: \{new\_idea\_main\}
Core Breakthrough: \{new\_idea\_breakthrough\}
Innovative Approach: \{new\_idea\_approach\}
Experimental Design: \{new\_idea\_experiment\}

'''

Calibration Example 1:
```
\{example1\}
'''

Calibration Example 2:
```
\{example2\}
'''

Calibration Example 3:
```
\{example3\}
'''

Important: Think carefully, as a new idea may appear highly novel and compelling but lack sufficient supporting details.

Output Format: A JSON list of four floating-point numbers (0-5):
[<rating>, <contribution>, <soundness>, <novelty>]

Example: [3.7, 4.1, 2.6, 2.9]

\end{tcolorbox}









   
   
   
   


   


   
   

   

    
    
    


\begin{tcolorbox}[title={Pairwise Comparison PROMPT}, colback=white,colframe=2,arc=1mm,boxrule=1pt,left=1mm,right=1mm,top=1mm,bottom=1mm]
\small
\color{2}
You are an intelligent agent who is expert in **scientific hypothesis evaluation**.  
A research idea can be decomposed into four part: Main Target, Core Breakthrough, Innovative Approach, and Experimental Design.

Here you will get two research ideas; you need to compare them on three dimensions: Rating, Soundness and Contribute, to judge which one is better on each dimension.  
Dimensions definitions:
- Rating: The overall quality of the idea. This is a comprehensive evaluation that summarizes the strengths and weaknesses of the idea as a whole.
- Soundness: The logical and methodological integrity of the research. Assess whether the idea is based on solid reasoning, uses appropriate methods, and avoids major flaws or unsupported assumptions.
- Contribute: Whether the outcomes will be useful for future work. Evaluate if the idea can advance scientific knowledge, provide valuable insights, or serve as a foundation for further research in the field.

Your answer format should be a think process between <think> and </think>, containing three steps: `1. ... 2. ... 3. ...', where 1 is the reasoning for Rating, 2 is the reasoning for Soundness, and 3 is the reasoning for Contribution, and a final list of three numbers (0, 1, or 0.5) in the order [Rating, Soundness, Contribution], where
- `1' means the first idea is better,
- `0.5' means the two ideas are equal,
- `0' means the second idea is better.
For example, if the first idea wins on Rating, loses on Soundness, and is equal on Contribution, your answer should be: `<think>...</think>[1, 0, 0.5]'
When considering the rating, you may use the Main Target and Innovative Approach as the most important basis.
When considering the soundness, you may use the Main Target, Innovative Approach and Experimental Design as the most important basis.
When considering the contribution, you may use the Main Target and Core Breakthrough as the most important basis.

Now:
First idea:
```The Main Target is \{idea\_A\_main\}.  
And the Core Breakthrough is \{idea\_A\_breakthrough\}.
The Innovative Approach is \{idea\_A\_approach\}.
The Experimental Design is \{idea\_A\_experiment\}.'''

Second idea:  
```The Main Target is \{idea\_B\_main\}.  
And the Core Breakthrough is \{idea\_B\_breakthrough\}.
The Innovative Approach is \{idea\_B\_approach\}.
The Experimental Design is \{idea\_B\_experiment\}.'''

Please give your answer directly according to the above format.

\end{tcolorbox}

\section{PAIR-IQ extra information}
\label{app:PAIR-IQ extra information}
Figure~\ref{fig:category_dist} illustrates the dataset statistics of PAIR-IQ, showing the distribution of collected papers across different conference venues.

An example of the standardized paper idea representation used in the PAIR-IQ dataset is shown below.

\begin{lstlisting}[language=json, caption={Example of a Paper Idea in PAIR-IQ}, basicstyle=\scriptsize\ttfamily,label={lst:Example of a Paper Idea in PAIR-IQ}]
{
  "title": "A Generative Model of Symmetry Transformations",
  "url": "https://openreview.net/forum
          ?id=aFP24eYpWh",
  "conference": "NeurIPS",
  "year": 2024,
  "label": "accept-poster",
  "abstract": "Correctly capturing the symmetry transformations of data can lead to efficient models with strong generalization capabilities ...",
  "keywords": "approximate symmetries, invariances, deep generative models",
  "primary area": "Generative models",
  "rating": 2.9070274219833774,
  "contribute": 2.587674461477269,
  "soundness": 2.843680907598996,
  "novelty": "0",
  "analysis": {
    "main_target": "This research constructs a generative model that learns to capture the approximate symmetries of data from a prespecified set of possible symmetries.",
    "core_breakthrough": "This paper introduces a Symmetry-aware Generative Model (SGM) that learns to capture the approximate symmetries present in data, leading to improved data efficiency and higher marginal test-log-likelihoods when combined with standard generative models.",
    "innovative_methods": "1. A generative model that decomposes the data distribution into a distribution over prototypes and a distribution over transformation parameters.\n2. A self-supervised learning scheme to train an equivariant transformation inference function.\n3. A method for learning the distribution over transformation parameters using maximum likelihood, allowing the model to capture the extent of symmetries present in the data.\n4. Composing transformations in a transformation parameter space to minimize the number of applications and reduce discretization errors.",
    "experimental_design": "1. Exploring the SGM’s ability to learn symmetries by producing valid prototypes and generating plausible samples from the data distribution given those prototypes.\n2. Leveraging the SGM to improve data efficiency in deep generative models.\n3. Conducting experiments on dSprites, MNIST, and GalaxyMNIST..."
  },
  "embedding": [1024]
}
\end{lstlisting}

\begin{figure}[t]
  \centering
  \includegraphics[width=0.48\textwidth,trim=0 0 0 0,clip]{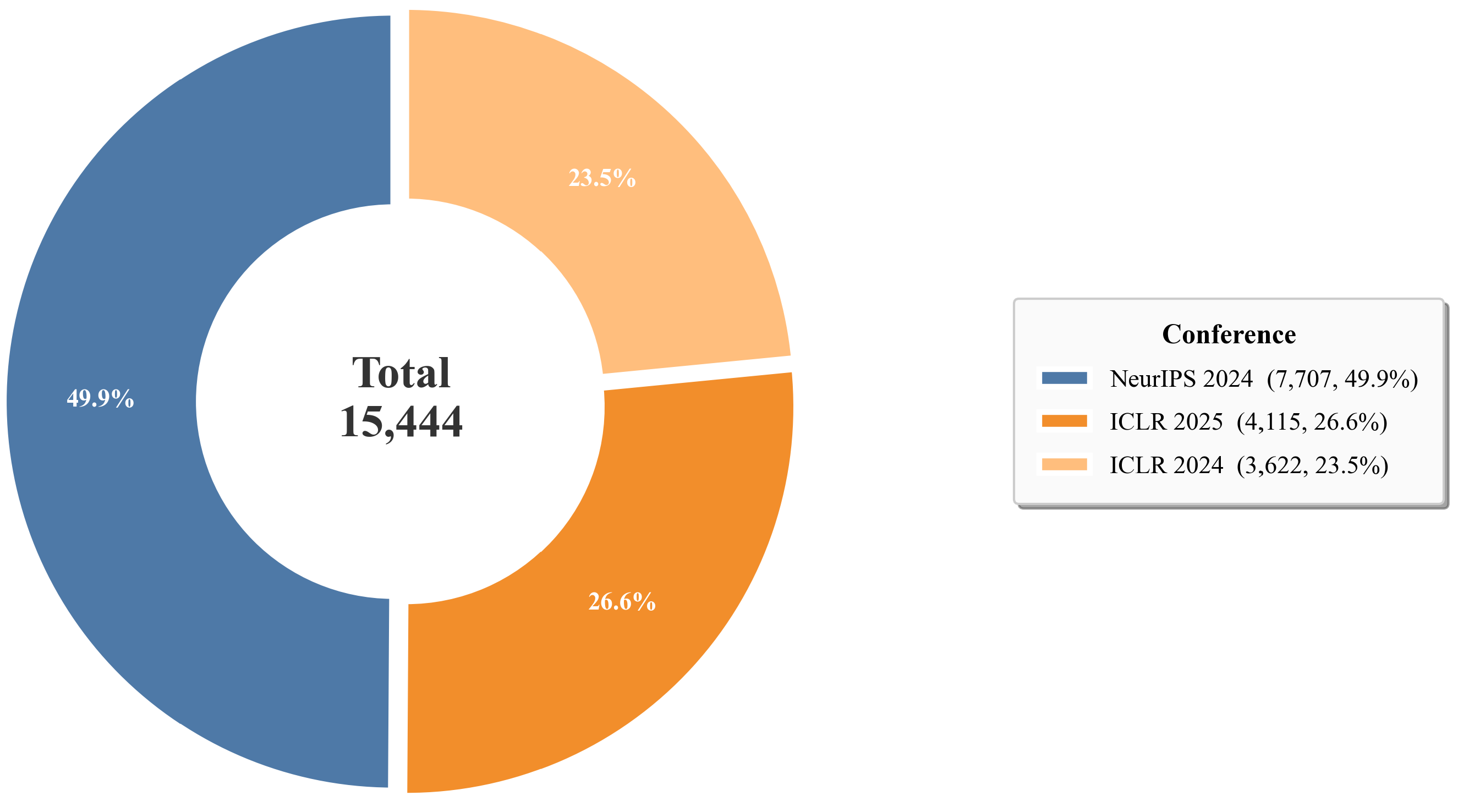} \hfill
  \caption {Distribution by conference venues.}
  \label{fig:category_dist}
\end{figure}

\section{LigBench Evaluation Radar Chart}
\label{app:radar}

Figure~\ref{fig:radar} visualizes the performance of different idea-generation frameworks and standalone LLMs summarized in Table~\ref{tab:benchmark}. Each axis corresponds to one of the evaluation metrics (Rating, Contrib., Sound., Novelty), allowing a quick comparison of the models' relative strengths. GPT-5.2 consistently achieves the highest overall scores, while CoI shows competitive results despite relying on GPT-5 as its backbone. This visualization complements the numeric results and provides an intuitive overview of model performance.

\begin{figure}[t]
    \centering
    \includegraphics[width=0.7\linewidth]{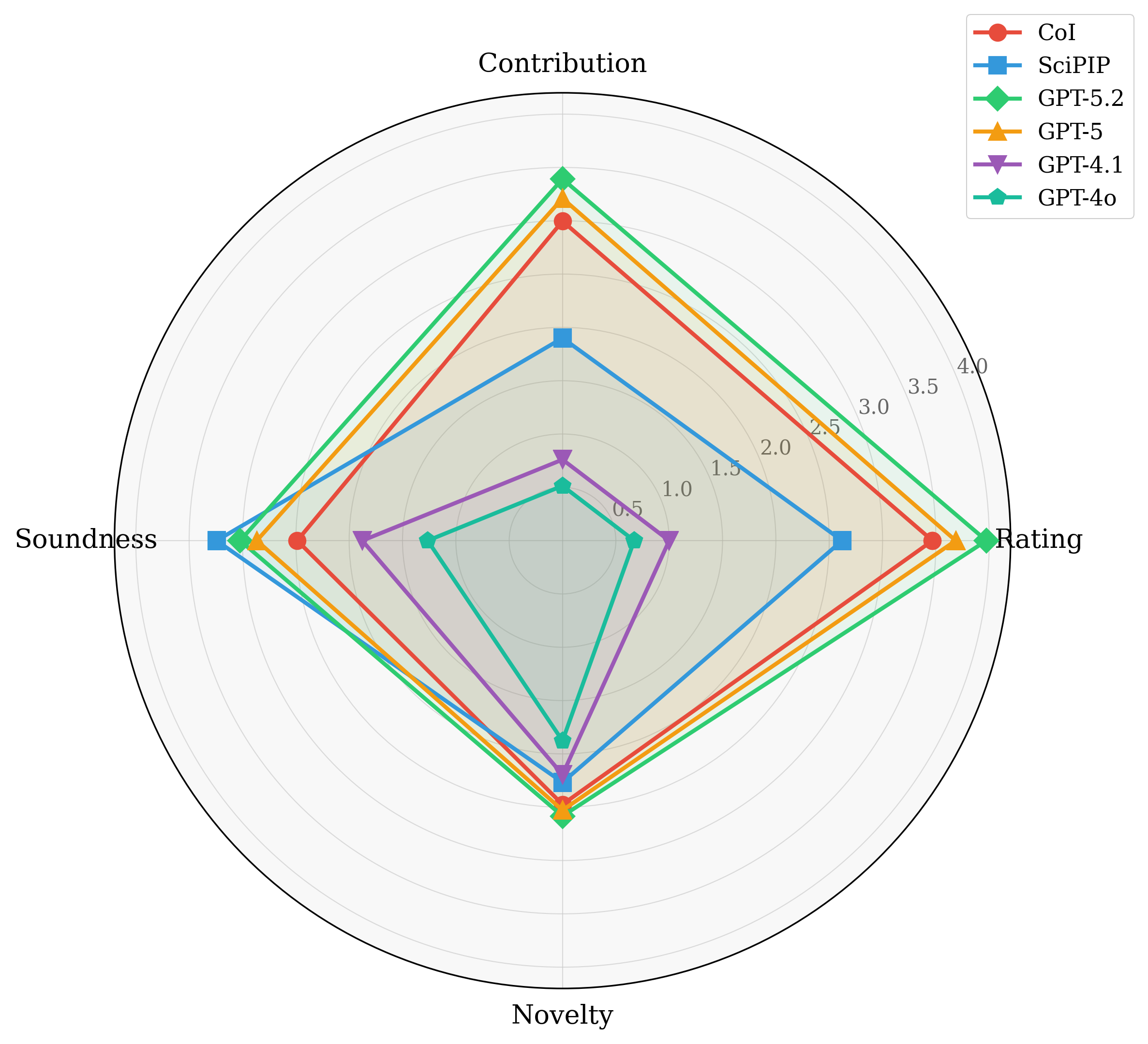}
    \caption{Radar chart visualization of Table~\ref{tab:benchmark}. GPT-5.2 achieves the highest overall performance, while CoI demonstrates competitive results despite using GPT-5 as its backbone.}
    \label{fig:radar}
\end{figure}

\section{Elo-based Score Update: Mathematical Analysis}
\label{appendix:elo}

This appendix provides a comprehensive mathematical treatment of the Elo-based score update mechanism used in LigBench, including theoretical foundations, convergence analysis, and empirical parameter optimization.

\subsection{Algorithm Overview}

The pairwise comparison module employs an adapted Elo rating system to iteratively update the scores of target ideas. The complete procedure is formalized in Algorithm~\ref{alg:elo}.

\begin{algorithm}[h]
\caption{Elo-based Pairwise Score Update}
\label{alg:elo}
\begin{algorithmic}[1]
\REQUIRE Target idea $I$, Reference paper set $\mathcal{R}$, Initial score $s_0$, Convergence threshold $\epsilon$, Maximum iterations $T_{\max}$
\ENSURE Final score $s^*$
\STATE $s \leftarrow s_0$; $t \leftarrow 0$
\WHILE{$t < T_{\max}$}
    \STATE Sample reference paper $R \sim \mathcal{R}$
    \STATE Compute expected outcome: $E = \frac{1}{1 + 10^{(s_R - s) / d}}$
    \STATE Obtain LLM judgment: $S \in \{0, 0.5, 1\}$
    \STATE Compute adaptive K-factor: $K_t = K_{\max} \cdot \gamma^t$
    \STATE Update raw score: $s^{\text{raw}} = s + K_t \cdot (S - E)$
    \STATE Apply soft clamping: $s^{\text{new}} = \mathcal{C}(s^{\text{raw}})$
    \IF{$|s^{\text{new}} - s| < \epsilon$}
        \STATE \textbf{break}
    \ENDIF
    \STATE $s \leftarrow s^{\text{new}}$; $t \leftarrow t + 1$
\ENDWHILE
\RETURN $s^* = s$
\end{algorithmic}
\end{algorithm}

\subsection{Theoretical Foundation}

\subsubsection{Expected Outcome Function}

The expected outcome function is derived from the Bradley-Terry model \citep{bradley1952rank}, which assumes that the probability of item $A$ being preferred over item $B$ follows a logistic function of their latent quality difference:
\begin{equation}
    P(A \succ B) = \frac{1}{1 + 10^{(s_B - s_A) / d}}.
\end{equation}

This formulation satisfies several desirable properties:
\begin{itemize}
    \item \textbf{Symmetry}: $P(A \succ B) + P(B \succ A) = 1$
    \item \textbf{Transitivity}: If $s_A > s_B > s_C$, then $P(A \succ C) > P(A \succ B)$
    \item \textbf{Monotonicity}: $\frac{\partial P(A \succ B)}{\partial s_A} > 0$
\end{itemize}

\subsubsection{Score Update as Gradient Descent}

The score update rule can be interpreted as a stochastic gradient descent step on a cross-entropy loss function. Let $\mathcal{L}(s_A)$ denote the expected loss:
\begin{equation}
    \mathcal{L}(s_A) = -\mathbb{E}_{S_A}[S_A \log E_A + (1 - S_A) \log(1 - E_A)].
\end{equation}

Taking the gradient with respect to $s_A$:
\begin{equation}
    \frac{\partial \mathcal{L}}{\partial s_A} = \frac{\ln 10}{d} \cdot (E_A - S_A).
\end{equation}

Thus, the update rule $s_A \leftarrow s_A + K(S_A - E_A)$ corresponds to gradient descent with learning rate $\eta = \frac{Kd}{\ln 10}$.

\subsection{Parameter Selection}

\subsubsection{Scaling Parameter $d$}

In standard chess Elo systems, $d = 400$ with ratings spanning approximately $[0, 3000]$. Our framework operates on a normalized score range of $[0, 5]$. To preserve the relative sensitivity of the expected outcome function, we rescale $d$ proportionally:
\begin{equation}
    d_{\text{new}} = d_{\text{chess}} \cdot \frac{\Delta s_{\text{new}}}{\Delta s_{\text{chess}}} = 400 \cdot \frac{5}{3000} \approx 0.67.
\end{equation}

However, empirical experiments revealed that this theoretical value leads to overly aggressive score changes. We conducted a grid search over $d \in \{0.5, 0.75, 1.0, 1.25, 1.5, 1.75, 2.0\}$ using a validation set of 500 idea pairs with known ground-truth scores.

\begin{table}[h]
\centering
\small
\begin{tabular}{cccc}
\toprule
$d$ & MAE $\downarrow$ & Rank Corr. $\uparrow$ & Conver. Rate $\uparrow$ \\
\midrule
0.50 & 0.412 & 0.723 & 0.89 \\
0.75 & 0.356 & 0.761 & 0.92 \\
1.00 & 0.298 & 0.789 & 0.94 \\
1.25 & 0.267 & 0.812 & 0.96 \\
\textbf{1.50} & \textbf{0.251} & \textbf{0.831} & \textbf{0.97} \\
1.75 & 0.263 & 0.819 & 0.95 \\
2.00 & 0.289 & 0.798 & 0.93 \\
\bottomrule
\end{tabular}
\caption{Parameter sensitivity analysis for scaling parameter $d$.}
\label{tab:d_sensitivity}
\end{table}

Based on these results, we set $d = 1.5$, which achieves the optimal balance between score discrimination sensitivity and convergence stability.

\subsubsection{Adaptive K-factor Strategy}

The K-factor controls the magnitude of score adjustments. We employ an exponential decay strategy:
\begin{equation}
    K_t = K_{\max} \cdot \gamma^t,
\end{equation}
where $K_{\max}$ is the initial K-factor, $\gamma \in (0, 1)$ is the decay rate, and $t$ is the iteration number.

This adaptive strategy addresses two competing objectives: (1) \textbf{Fast initial convergence}: Large $K$ values enable rapid movement toward the true score when the initial estimate is poor. (2) \textbf{Fine-grained calibration}: Small $K$ values prevent oscillation around the equilibrium and enable precise final positioning.

We optimized $(K_{\max}, \gamma)$ through grid search:

\begin{table}[h]
\centering
\small
\begin{tabular}{ccccc}
\toprule
$K_{\max}$ & $\gamma$ & Iterations $\downarrow$ & Final MAE $\downarrow$ & Stability $\uparrow$ \\
\midrule
0.3 & 0.90 & 28.4 & 0.287 & 0.91 \\
0.3 & 0.95 & 35.2 & 0.264 & 0.94 \\
0.5 & 0.90 & 18.7 & 0.271 & 0.89 \\
\textbf{0.5} & \textbf{0.95} & \textbf{22.3} & \textbf{0.251} & \textbf{0.97} \\
0.5 & 0.98 & 31.6 & 0.249 & 0.96 \\
0.7 & 0.90 & 14.2 & 0.298 & 0.82 \\
0.7 & 0.95 & 17.8 & 0.273 & 0.88 \\
\bottomrule
\end{tabular}
\caption{Grid search results for K-factor parameters.}
\label{tab:k_sensitivity}
\end{table}

The optimal configuration is $K_{\max} = 0.5$ and $\gamma = 0.95$, balancing convergence speed with final accuracy.

\subsection{Soft Clamping Mechanism}

Since our scores are constrained to $[0, 5]$, we apply a smooth boundary function using the hyperbolic tangent:
\begin{equation}
    \mathcal{C}(s) = 2.5 + 2.5 \cdot \tanh\left(\frac{s - 2.5}{2.5}\right).
\end{equation}

\subsubsection{Properties of the Clamping Function}

\begin{proposition}
The clamping function $\mathcal{C}: \mathbb{R} \to (0, 5)$ satisfies:
\begin{enumerate}
    \item $\mathcal{C}(2.5) = 2.5$ (fixed point at center)
    \item $\lim_{s \to \infty} \mathcal{C}(s) = 5$ and $\lim_{s \to -\infty} \mathcal{C}(s) = 0$
    \item $\mathcal{C}'(s) > 0$ for all $s \in \mathbb{R}$ (strictly monotonic)
    \item $\mathcal{C}'(2.5) = 1$ (identity mapping near center)
\end{enumerate}
\end{proposition}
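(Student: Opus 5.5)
The plan is to verify each of the four properties by direct computation from the definition $\mathcal{C}(s) = 2.5 + 2.5\tanh\left(\frac{s-2.5}{2.5}\right)$, using only elementary facts about $\tanh$. These facts are that $\tanh(0)=0$, that $\tanh(x)\to \pm 1$ as $x\to\pm\infty$, and that $\tanh'(x) = 1-\tanh^2(x) = \operatorname{sech}^2(x) > 0$. It is convenient to substitute $u = (s-2.5)/2.5$, which is an increasing affine bijection of $\mathbb{R}$ onto itself. Then $\mathcal{C}(s) = 2.5(1+\tanh u)$, and every claim reduces to a statement about $\tanh$.

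First, the codomain. Since $\tanh$ maps $\mathbb{R}$ into the open interval $(-1,1)$, we get $\mathcal{C}(s) \in (0,5)$ for every real $s$, which justifies writing $\mathcal{C}:\mathbb{R}\to(0,5)$.

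Property 1 follows at once: $s=2.5$ gives $u=0$, so $\mathcal{C}(2.5) = 2.5 + 2.5\cdot 0 = 2.5$.

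For property 2, note that $u\to+\infty$ exactly when $s\to+\infty$, and $u\to-\infty$ exactly when $s\to-\infty$, because the substitution is increasing and affine. Hence $\tanh u \to 1$ or $-1$ respectively, and the limits of $\mathcal{C}$ are $2.5+2.5=5$ and $2.5-2.5=0$.

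For property 3, apply the chain rule:
\begin{equation}
\mathcal{C}'(s) = 2.5\cdot\frac{1}{2.5}\,\operatorname{sech}^2\!\left(\frac{s-2.5}{2.5}\right) = \operatorname{sech}^2\!\left(\frac{s-2.5}{2.5}\right).
\end{equation}
This is strictly positive, since $\cosh$ is finite and at least $1$ everywhere. Strict monotonicity then follows from the mean value theorem.

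Property 4 comes from evaluating the same expression at $s=2.5$: $\mathcal{C}'(2.5) = \operatorname{sech}^2(0) = 1$. Near the centre this gives $\mathcal{C}(s) = s + O((s-2.5)^3)$, because $\tanh$ is odd and so the second-order term vanishes. This justifies the phrase ``identity mapping near center.''

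None of these steps is a genuine obstacle. The only point needing care is the exact cancellation of the outer factor $2.5$ against the inner factor $1/2.5$ in the chain rule, since this cancellation is what makes the slope at the centre exactly $1$. I would also note in passing that the open codomain $(0,5)$ means the endpoints $0$ and $5$ are approached but never attained.
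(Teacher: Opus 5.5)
Your proposal is correct and follows the paper's proof essentially line by line. Both verify each property directly from $\mathcal{C}(s) = 2.5 + 2.5\tanh\bigl(\frac{s-2.5}{2.5}\bigr)$ using $\tanh(0)=0$, the limits $\tanh(x)\to\pm 1$, and the chain-rule derivative $\mathcal{C}'(s)=\mathrm{sech}^2\bigl(\frac{s-2.5}{2.5}\bigr)$. Your two additional checks, that the image lies in $(0,5)$ and that $\mathcal{C}$ deviates from the identity only at order $O((s-2.5)^3)$ near the centre, are correct and make explicit points the paper's proof leaves implicit.
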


\begin{proof}
Property 1: $\mathcal{C}(2.5) = 2.5 + 2.5 \cdot \tanh(0) = 2.5$.

Property 2: Since $\lim_{x \to \pm\infty} \tanh(x) = \pm 1$, we have $\lim_{s \to \infty} \mathcal{C}(s) = 2.5 + 2.5 = 5$ and $\lim_{s \to -\infty} \mathcal{C}(s) = 2.5 - 2.5 = 0$.

Property 3: $\mathcal{C}'(s) = \text{sech}^2\left(\frac{s-2.5}{2.5}\right) > 0$.

Property 4: $\mathcal{C}'(2.5) = \text{sech}^2(0) = 1$.
\end{proof}

The soft clamping preserves the relative ordering of scores while ensuring boundary constraints are satisfied smoothly, unlike hard clipping which introduces discontinuities in the gradient.




The tanh-based clamping was selected because it provides symmetric behavior around the midpoint ($s = 2.5$), maintains near-identity mapping for scores in the typical range $[1, 4]$, and gracefully compresses extreme values.

\subsection{Convergence Analysis}

\subsubsection{Convergence Conditions}

\begin{theorem}[Convergence of Elo Updates]
\label{thm:convergence}
Let $\{s_t\}_{t=0}^{\infty}$ be the sequence of scores generated by the Elo update rule with adaptive K-factor $K_t = K_{\max} \cdot \gamma^t$ where $\gamma \in (0, 1)$. Assume the LLM judgment is consistent with the ground-truth ordering with probability $p > 0.5$. Then $\{s_t\}$ converges almost surely to a fixed point $s^*$.
\end{theorem}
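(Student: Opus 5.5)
The plan is to show that convergence follows from the summability of the step sizes alone, so the probabilistic assumption on the judge is needed only to say where the limit lies, not to get convergence. Write one iteration of Algorithm~\ref{alg:elo} as $s_{t+1} = \mathcal{C}\bigl(s_t + K_t (S_t - E_t)\bigr)$. Here $S_t \in \{0, 0.5, 1\}$ and $E_t \in (0,1)$, so the innovation satisfies $|S_t - E_t| \le 1$ deterministically.

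The first step is to bound the increments. By Property~3 of the clamping proposition, $\mathcal{C}'(s) = \text{sech}^2\bigl((s-2.5)/2.5\bigr) \in (0,1]$, so $\mathcal{C}$ is $1$-Lipschitz. Since $s_t \in (0,5)$ for $t \ge 1$, the map $\mathcal{C}$ is not the identity, so I cannot simply write $s_{t+1} - s_t = K_t(S_t - E_t)$. Instead I use
\[
|s_{t+1} - s_t| \le |\mathcal{C}(s_t + K_t(S_t-E_t)) - \mathcal{C}(s_t)| + |\mathcal{C}(s_t) - s_t| .
\]
The first term is at most $K_t$.

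The second term is the main obstacle: $\mathcal{C}(s) \neq s$ away from $2.5$, so repeated clamping creates a drift toward the midpoint that is not summable. I see two routes.

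\begin{itemize}
\item \textbf{Bound the composed recursion directly.} Define $g_t(s) = \mathcal{C}(s + K_t(S_t - E_t))$. The iterates are a composition of maps that are each $1$-Lipschitz and contract strictly toward $2.5$ away from the center. The deterministic skeleton $s \mapsto \mathcal{C}(s)$ has the unique fixed point $2.5$, and iterating $\mathcal{C}$ converges to it monotonically, since $|\mathcal{C}(s) - 2.5| < |s - 2.5|$ for $s \neq 2.5$ by $\tanh|x| < |x|$. The perturbations are bounded by $K_t$ and are summable, because $\sum_t K_t = K_{\max}/(1-\gamma) < \infty$. A standard result on asymptotically autonomous contractive iterations with summable perturbations would then give convergence of every sample path. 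The limit would be the fixed point of $\mathcal{C}$, which is degenerate but technically convergent.
\item \textbf{Read the clamp as applied once (preferred).} If the intended reading is that the clamp is applied to the accumulated raw score, i.e.\ $s_t = \mathcal{C}(r_t)$ with $r_{t+1} = r_t + K_t(S_t - E_t)$, then $|r_{t+1} - r_t| \le K_t$. This makes $\{r_t\}$ Cauchy for every realization of the judgments, and continuity of $\mathcal{C}$ transfers the limit to $s_t$.
\end{itemize}

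I would adopt the second reading, state it explicitly, and remark how the first is handled.

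Next, I conclude convergence. Under that reading, for $m > n$ we have $|r_m - r_n| \le \sum_{t \ge n} K_t = K_{\max}\gamma^n/(1-\gamma) \to 0$. So the sequence converges surely, not merely almost surely, to some $r^*$, and hence $s_t \to s^* = \mathcal{C}(r^*)$.

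Finally, I interpret the assumption $p > 0.5$ to characterize the limit. The expected innovation is $\mathbb{E}[S_t - E_t \mid s_t] = p_t(s_t) - E(s_t)$, where $p_t$ is the probability that the judge prefers the target. Because the judge agrees with the true ordering with probability $p > 1/2$, this drift is positive when $s_t$ lies below the true score and negative when it lies above. By the Monotonicity property of the Bradley--Terry expectation, the drift therefore points toward the true score.

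I would add this as a remark rather than claim it. Since $\sum_t K_t < \infty$, the Robbins--Monro condition $\sum_t K_t = \infty$ fails, so the limit $s^*$ is a random point near $s_0$ displaced in the direction of the true score. It is not necessarily the true score itself, and I would flag this honestly.
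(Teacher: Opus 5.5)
\textbf{Gap: your main argument is for a different recursion.} Reading~2 is not the paper's update rule. Both the main-text rule $s_A^{\text{new}} = \mathcal{C}(s_A + K(S_A - E_A))$ and Algorithm~\ref{alg:elo} compute $s^{\text{raw}} = s + K_t(S-E)$ from the current \emph{clamped} score and then set $s \leftarrow \mathcal{C}(s^{\text{raw}})$. So $\mathcal{C}$ is re-applied at every step, and there is no accumulated raw score $r_t$. Your Cauchy argument for $r_t$ is correct, but it concerns another process.

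The recursion the theorem is about is your first route, so that should be the main argument. It needs an actual proof rather than an appeal to results on contractive iterations. A Banach-type perturbation lemma does not apply, because $\mathcal{C}$ is only nonexpansive: $\mathcal{C}'(2.5)=1$. The argument closes in a few lines:
\begin{enumerate}
\item Set $u_t = |s_t-2.5|$ and $h(u) = 2.5\tanh(u/2.5)$. Then $|\mathcal{C}(s)-2.5| = h(|s-2.5|)$, and $h$ is increasing, $1$-Lipschitz, and satisfies $h(u)<u$ for $u>0$.
\item Hence $u_{t+1} \le h(u_t+K_t) \le h(u_t)+K_t$.
\item It follows that $b_t = u_t + \sum_{j\ge t}K_j$ satisfies $b_{t+1} \le h(u_t)+\sum_{j\ge t}K_j \le b_t$.
\item So $b_t$ decreases to some $b^*\ge 0$, and $u_t\to b^*$ because the tails of $\sum_j K_j$ vanish.
\item Letting $t\to\infty$ in $u_{t+1}\le h(u_t)+K_t$ gives $b^*\le h(b^*)$, so $b^*=0$.
\end{enumerate}
Thus every sample path converges to $s^*=2.5$. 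This uses only $|S_t-E_t|\le 1$ and $\sum_t K_t<\infty$; the hypothesis $p>0.5$ plays no role.

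\textbf{Comparison with the paper.} The paper's sketch uses the same two ingredients you use, bounded increments and summable $K_t$. It then invokes the martingale convergence theorem on the claim that $\mathbb{E}[s_{t+1}-s_t\mid s_t]$ has the sign of $s^*-s_t$, with $s^*$ the true score, and concludes that $s_t$ converges to the true score. Your suspicion of that step is justified, for three reasons: with $\sum_t K_t<\infty$ the Robbins--Monro mechanism is unavailable; no martingale is actually exhibited; and under iterated clamping the limit is $2.5$ whatever the true score is. The theorem therefore holds only in this degenerate sense.

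\textbf{Your closing remark.} It repeats the paper's sign claim, which is false in general. Suppose the judge prefers the target with probability $p$ against references of score $s_R$ below the target's true score $q$, with no ties. The conditional drift $p-E(s_t)$ then vanishes at $s_t = s_R + d\log_{10}\frac{p}{1-p}$, which does not depend on $q$. Between that level and $q$ the drift is negative. So the drift points toward a level set by $p$ and the reference pool, not toward the true score; drop or qualify that remark.
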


\begin{proof}[Proof Sketch]
The proof relies on three key observations:

\textbf{Step 1 (Bounded Updates):} The update magnitude is bounded by $|K_t(S_A - E_A)| \leq K_t$ since $S_A, E_A \in [0, 1]$. Combined with the soft clamping, scores remain in $[0, 5]$.

\textbf{Step 2 (Summable Learning Rates):} The series $\sum_{t=0}^{\infty} K_t = K_{\max} \sum_{t=0}^{\infty} \gamma^t = \frac{K_{\max}}{1-\gamma} < \infty$, ensuring diminishing step sizes.

\textbf{Step 3 (Martingale Convergence):} Define the expected score change $\Delta_t = \mathbb{E}[s_{t+1} - s_t | s_t]$. Under the consistency assumption, $\Delta_t$ has the same sign as $(s^* - s_t)$ where $s^*$ is the true score. By the martingale convergence theorem, $s_t \to s^*$ almost surely.
\end{proof}

\subsubsection{Stability Under Noisy Judgments}

In practice, LLM judgments contain noise. We analyze robustness by introducing artificial noise:

\begin{table}[h]
\centering
\resizebox{0.5\textwidth}{!}{

\begin{tabular}{cccc}
\toprule
\makecell{Noise\\Level} 
& \makecell{Final\\MAE} 
& \makecell{Rank\\Correlation} 
& \makecell{Convergence\\(\%)} \\
\midrule
0\% (ideal) & 0.251 & 0.831 & 100\% \\
10\% & 0.267 & 0.812 & 99.2\% \\
20\% & 0.298 & 0.784 & 97.6\% \\
30\% & 0.342 & 0.751 & 94.1\% \\
\bottomrule
\end{tabular}
}
\caption{Robustness to judgment noise.}
\label{tab:noise}
\end{table}

The algorithm maintains reasonable performance even with 20-30\% judgment error, demonstrating robustness to imperfect LLM evaluations.

\section{Novelty Assessment: Detailed Methodology}
\label{appendix:novelty}

This appendix provides a comprehensive description of the novelty assessment module, including the mathematical formulation, parameter selection rationale, and empirical validation.

\subsection{Three-Stage Evaluation Pipeline}

The novelty assessment proceeds through three sequential stages, each contributing a distinct aspect to the final evaluation.

\subsubsection{Stage 1: LLM-based Initialization}

We first obtain an initial novelty estimate $s_{\text{novelty}}^{(0)}$ from the LLM evaluator. This provides a baseline assessment based on the model's understanding of the idea's originality and potential to introduce new insights. The LLM is prompted to rate novelty on a 0-5 scale, considering whether the approach incorporates eye-catching aspects or methods not commonly seen in the field.

The initialization prompt instructs the model to evaluate: (1) whether the core idea represents a departure from existing approaches; (2) the presence of unconventional methodological choices; and (3) the potential to open new research directions.

\subsubsection{Stage 2: Similarity-Based Quantification}

To objectively measure novelty against the current research landscape, we retrieve semantically related papers using the Semantic Scholar API. Based on the formalized representation of the target idea, we search for the most relevant publications and compute their cosine similarities $\{s_1, s_2, \ldots, s_N\}$ with the target idea using pre-trained embedding models.

\paragraph{Weighted Similarity Metric.}
We construct a weighted similarity metric that balances maximum and average similarity:
\begin{equation}
    s_{\text{weighted}} = \alpha \cdot \max(\{s_i\}) + (1-\alpha) \cdot \frac{1}{N}\sum_{i=1}^{N} s_i,
    \label{eq:weighted_sim}
\end{equation}
where $\alpha$ controls the emphasis on the most similar existing work.

\paragraph{Rationale for Weighting.}
The weighting scheme reflects the intuition that novelty is fundamentally constrained by the most similar existing work: even if an idea differs from most papers, high similarity to a single prior publication significantly limits its novelty. Setting $\alpha = 0.6$ emphasizes this constraint while still accounting for the broader similarity landscape.

\paragraph{Inverse Sigmoid Mapping.}
The weighted similarity is mapped to a novelty score through an inverse sigmoid function:
\begin{equation}
    s_{\text{sim}} = 5 \cdot \left(1 - \frac{1}{1 + e^{-k(s_{\text{weighted}} - \mu)}}\right),
    \label{eq:sigmoid_map}
\end{equation}
where $k$ controls the steepness of the transition and $\mu$ defines the inflection point. This continuous mapping ensures that high similarity to existing work leads to low novelty scores, while low similarity indicates high novelty.

\subsubsection{Stage 3: Continuous Fusion}

The final novelty score integrates both the LLM's subjective assessment and the objective similarity-based measurement:
\begin{equation}
    s_{\text{novelty}}^{\text{final}} = \beta \cdot s_{\text{sim}} + (1-\beta) \cdot s_{\text{novelty}}^{(0)},
    \label{eq:fusion}
\end{equation}
where $\beta$ determines the relative contribution of each component.

\subsection{Parameter Selection}

\subsubsection{Weighting Parameter $\alpha$}

We conducted experiments to determine the optimal value of $\alpha$ in Equation~\ref{eq:weighted_sim}. The results are shown in Table~\ref{tab:alpha_sens}.

\begin{table}[!t]
\centering
\small
\caption{Effect of weighting parameter $\alpha$ on novelty assessment quality.}
\label{tab:alpha_sens}
\begin{tabular}{@{}lccc@{}}
\toprule
$\alpha$ & Hum.Corr.$\uparrow$ & Discrim.$\uparrow$ & Stab.$\uparrow$ \\
\midrule
0.0 (avg only) & 0.612 & 0.534 & 0.91 \\
0.3 & 0.658 & 0.587 & 0.93 \\
0.5 & 0.691 & 0.623 & 0.94 \\
\textbf{0.6} & \textbf{0.724} & \textbf{0.651} & \textbf{0.95} \\
0.7 & 0.718 & 0.642 & 0.94 \\
0.8 & 0.687 & 0.598 & 0.92 \\
1.0 (max only) & 0.621 & 0.512 & 0.89 \\
\bottomrule
\end{tabular}
\end{table}

The results show that $\alpha = 0.6$ achieves the best correlation with human novelty judgments while maintaining stable assessments.

\subsubsection{Sigmoid Parameters $k$ and $\mu$}

The parameters $k$ and $\mu$ in Equation~\ref{eq:sigmoid_map} control the shape of the similarity-to-novelty mapping.

\paragraph{Inflection Point $\mu$.}
The inflection point $\mu = 0.72$ was calibrated based on empirical analysis of similarity distributions in the PAIR-IQ dataset. At this similarity level, ideas typically exhibit moderate overlap with existing work, justifying a neutral novelty score of 2.5. Table~\ref{tab:sim_stats} shows the similarity distribution statistics.

\begin{table}[!t]
\centering
\small
\caption{Similarity distribution statistics from PAIR-IQ.}
\label{tab:sim_stats}
\begin{tabular}{@{}lc@{}}
\toprule
Statistic & Value \\
\midrule
Mean similarity & 0.68 \\
Median similarity & 0.71 \\
Standard deviation & 0.12 \\
25th percentile & 0.61 \\
75th percentile & 0.79 \\
\bottomrule
\end{tabular}
\end{table}

\paragraph{Steepness Parameter $k$.}
The steepness parameter $k = 12$ was selected to ensure meaningful score differentiation across the typical similarity range $[0.5, 0.9]$. Table~\ref{tab:k_sens} shows the effect of different $k$ values.

\begin{table}[!t]
\centering
\small
\caption{Effect of steepness parameter $k$ on score distribution.}
\label{tab:k_sens}
\begin{tabular}{@{}lccc@{}}
\toprule
$k$ & Range$\uparrow$ & Entropy$\uparrow$ & Hum.Corr.$\uparrow$ \\
\midrule
6 & 2.1 & 1.82 & 0.687 \\
9 & 3.2 & 1.91 & 0.712 \\
\textbf{12} & \textbf{4.1} & \textbf{1.98} & \textbf{0.724} \\
15 & 4.5 & 1.94 & 0.718 \\
18 & 4.7 & 1.87 & 0.703 \\
\bottomrule
\end{tabular}
\end{table}

\subsubsection{Fusion Weight $\beta$}

The fusion weight $\beta = 0.7$ was determined through ablation studies comparing different weighting schemes, as shown in Table~\ref{tab:beta_abl}.

\begin{table}[!t]
\centering
\small
\caption{Ablation study on fusion weight $\beta$.}
\label{tab:beta_abl}
\begin{tabular}{@{}lccc@{}}
\toprule
$\beta$ & Hum.Corr.$\uparrow$ & Robust.$\uparrow$ & Interp. \\
\midrule
0.0 (LLM only) & 0.651 & 0.72 & High \\
0.3 & 0.689 & 0.81 & High \\
0.5 & 0.712 & 0.87 & Medium \\
\textbf{0.7} & \textbf{0.724} & \textbf{0.93} & Medium \\
0.9 & 0.718 & 0.95 & Low \\
1.0 (Sim only) & 0.698 & 0.96 & Low \\
\bottomrule
\end{tabular}
\end{table}

Setting $\beta = 0.7$ prioritizes the objective similarity-based measurement while retaining sufficient LLM contribution to capture semantic nuances not reflected in embedding similarity.

\subsection{Mathematical Properties}

\subsubsection{Properties of the Inverse Sigmoid Mapping}

The mapping function defined by Equation~\ref{eq:sigmoid_map} satisfies several desirable properties.

\begin{proposition}
\label{prop:sigmoid}
The mapping function $f: [0,1] \to [0,5]$ satisfies:
\begin{enumerate}
    \item $f$ is strictly monotonically decreasing
    \item $f(\mu) = 2.5$ (midpoint property)
    \item $\lim_{s \to 0} f(s) = 5$ and $\lim_{s \to 1} f(s) = 0$
    \item $f$ is smooth ($C^\infty$) on $(0,1)$
\end{enumerate}
\end{proposition}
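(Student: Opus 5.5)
The plan is to write the map in Equation~\ref{eq:sigmoid_map} as $f(s) = 5\bigl(1-\sigma(k(s-\mu))\bigr)$, where $\sigma(x) = 1/(1+e^{-x})$ is the logistic function and $k>0$. Every property should then follow from elementary facts about $\sigma$:
\[
\sigma(0)=\tfrac12, \qquad \sigma'(x)=\sigma(x)\bigl(1-\sigma(x)\bigr)>0, \qquad \sigma \in C^\infty(\mathbb{R}).
\]
I will take the four items in the order 2, 1, 4, 3, saving the one that needs care for last.

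For the midpoint property, substitute $s=\mu$ to get $f(\mu) = 5(1-\sigma(0)) = 2.5$. For strict monotonicity, differentiate with the chain rule:
\[
f'(s) = -5k\,\sigma(k(s-\mu))\bigl(1-\sigma(k(s-\mu))\bigr).
\]
This is strictly negative for every $s$, because $k>0$ and $\sigma$ takes values in $(0,1)$. By the mean value theorem, $f$ is therefore strictly decreasing on $[0,1]$. Smoothness is immediate: $f$ is a composition of the affine map $s\mapsto k(s-\mu)$, the function $\sigma$ (which is $C^\infty$ because $1+e^{-x}$ never vanishes), and an affine rescaling. Hence $f$ is $C^\infty$ on all of $\mathbb{R}$, and in particular on $(0,1)$.

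Item 3 is the main obstacle, and the plan must be careful here. Because $f$ is continuous on $[0,1]$, the one-sided limits in item 3 equal the endpoint values:
\[
\lim_{s\to 0} f(s) = 5\bigl(1-\sigma(-k\mu)\bigr), \qquad \lim_{s\to 1} f(s) = 5\bigl(1-\sigma(k(1-\mu))\bigr).
\]
These are exactly $5$ and $0$ only in the limit $k\to\infty$, since $\sigma(-k\mu)\to 0$ and $\sigma(k(1-\mu))\to 1$ when $0<\mu<1$. For a finite $k$ they hold only approximately, with explicit errors
\[
\frac{5}{1+e^{k\mu}} \quad \text{at } s\to 0, \qquad \frac{5}{1+e^{k(1-\mu)}} \quad \text{at } s\to 1.
\]
With the paper's values $k=12$ and $\mu=0.72$, these errors are about $9\times10^{-4}$ at $s\to 0$ and about $0.17$ at $s\to 1$.

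So I will prove item 3 in the only sense in which it is true. I will show the limits equal the endpoint values above, and show that these converge to $5$ and $0$ as the steepness $k\to\infty$, with the exponential error bounds just given. I will state explicitly that for fixed finite $k$, including the calibrated $k=12$, the literal equality $\lim_{s\to1}f(s)=0$ fails. The correct reading is therefore that $f$ maps $[0,1]$ into $(0,5)$, and that its endpoint values approach $5$ and $0$ exponentially fast in $k$.
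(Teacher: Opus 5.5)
Your arguments for items 1, 2 and 4 are the paper's. It also differentiates directly to get $f'(s) = -5k\,e^{-k(s-\mu)}/(1+e^{-k(s-\mu)})^2<0$, which is your $-5k\,\sigma(1-\sigma)$ written out. It then substitutes $s=\mu$ and cites composition of smooth functions.

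On item 3 you part ways with the paper, and you are the one who is right. The paper's proof asserts that $e^{-k(s-\mu)}\to e^{k\mu}\to\infty$ as $s\to0$, and that $e^{-k(s-\mu)}\to 0$ as $s\to1$. But once $k$ and $\mu$ are fixed, $e^{k\mu}$ and $e^{-k(1-\mu)}$ are finite positive constants, about $5.65\times10^{3}$ and $0.035$ for $k=12$, $\mu=0.72$. That step therefore silently replaces the limit in $s$ by a limit $k\to\infty$, or by $s\to\mp\infty$.

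Your exact endpoint values are $f(0)=5-5/(1+e^{k\mu})$ and $f(1)=5/(1+e^{k(1-\mu)})\approx0.17$. They show that item 3 is false as literally stated at the calibrated parameters, and indeed for every finite $k$. They also show that $f([0,1])=[f(1),f(0)]$ is a proper subinterval of $(0,5)$.

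Your replacement is the correct provable statement: the limits equal the endpoint values, and these tend to $5$ and $0$ exponentially fast in $k$ when $0<\mu<1$. The original item holds only for $s\to-\infty$ and $s\to+\infty$ with $f$ extended to $\mathbb{R}$. That is presumably what the authors intended, by analogy with the clamping function $\mathcal{C}$.
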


\begin{proof}
Property 1: Taking the derivative,
\begin{equation}
    f'(s) = -5 \cdot \frac{k \cdot e^{-k(s-\mu)}}{(1 + e^{-k(s-\mu)})^2} < 0
\end{equation}
for all $s \in (0,1)$, confirming strict monotonic decrease.

Property 2: At $s = \mu$,
\begin{equation}
    f(\mu) = 5 \cdot \left(1 - \frac{1}{1 + e^0}\right) = 5 \cdot \left(1 - \frac{1}{2}\right) = 2.5.
\end{equation}

Property 3: As $s \to 0$, $e^{-k(s-\mu)} \to e^{k\mu} \to \infty$, so $f(s) \to 5$. As $s \to 1$, $e^{-k(s-\mu)} \to 0$, so $f(s) \to 0$.

Property 4: $f$ is a composition of smooth functions (exponential and rational), hence smooth on the interior.
\end{proof}

\subsubsection{Sensitivity Analysis}

The sensitivity of the novelty score to changes in weighted similarity is given by:
\begin{equation}
    \left|\frac{\partial s_{\text{novelty}}^{\text{final}}}{\partial s_{\text{weighted}}}\right| = \beta \cdot \frac{5k \cdot e^{-k(s_{\text{weighted}}-\mu)}}{(1 + e^{-k(s_{\text{weighted}}-\mu)})^2}.
\end{equation}

This sensitivity is maximized at $s_{\text{weighted}} = \mu$, where small changes in similarity lead to the largest score adjustments, and diminishes toward the extremes, providing natural robustness to outliers.

\end{document}